\documentclass[conference]{IEEEtran}

\usepackage{cite}
\usepackage{amsmath,amssymb,amsfonts}
\usepackage{graphicx}
\usepackage{textcomp}
\usepackage{xcolor}
\usepackage{booktabs}
\usepackage{multirow}
\usepackage{colortbl}
\usepackage{hyperref}
\usepackage{algorithm}
\usepackage{algorithmic}
\usepackage[normalem]{ulem}
\useunder{\uline}{\ul}{}

\hypersetup{
    colorlinks=true,
    linkcolor=blue,
    citecolor=blue,
    urlcolor=blue,
}

\newcommand{\cmmnt}[1]{}

\newtheorem{theorem}{Theorem}
\newtheorem{proposition}[theorem]{Proposition}

\begin{document}

\title{Context-aware Modality-Topology Co-Alignment for Multimodal Attributed Graphs}

\author{\IEEEauthorblockN{\textbf{Sirui Zhang\quad  Xu Wang\quad  Zhengyu Wu\quad  Xunkai Li\quad  Hongchao Qin*}}}

\maketitle

\begin{abstract}
Multimodal Attributed Graphs (MAGs) model real-world entities by coupling graph topology with heterogeneous semantic attributes such as text and images.
This data structure naturally supports both graph-centric tasks, which require structural and class-discriminative representations, and modality-centric tasks, which require fine-grained cross-modal correspondence.
However, existing MAG methods often rely on a fixed graph context or a uniformly fused multimodal representation.
Such designs can lead to task-agnostic context propagation and over-compressed cross-modal fusion, making it difficult to satisfy different task requirements while preserving modality-specific evidence.
To address this challenge, we propose \underline{\textbf{CoMAG}} (Context-aware Modality-Topology Co-Alignment), a unified MAG backbone that learns task-adaptive reliable contexts and performs modality-preserving alignment within those contexts.
Concretely, CoMAG first performs Reliable Context Learning by estimating edge reliability from multimodal semantic consistency, supplementing the raw topology with semantic neighbors, and selecting context components through a task-aware gate.
It then performs Modality-preserving Hop-token Alignment by maintaining modality-specific multi-hop trajectories, matching modality-hop tokens across modalities, and using shared-private representation decoupling to produce graph representations and modality representations from the same forward pass.
We further provide theoretical analysis of stable propagation, over-smoothing mitigation, and modality-collapse control.
Experiments on nine OpenMAG datasets compare CoMAG with feature-only, graph-only, multimodal, and unified MAG baselines across graph-level prediction, modality matching, and graph-conditioned generation protocols.
The results show that CoMAG achieves the best reported performance among compared methods, indicating that task-adaptive reliable contexts and modality-preserving alignment jointly strengthen structural prediction, cross-modal matching, and graph-conditioned generation while retaining sparse edge-linear complexity. Code is available at \url{https://anonymous.4open.science/r/CoMAG}.
\end{abstract}

\begin{IEEEkeywords}
multimodal attributed graphs, graph neural networks, cross-modal alignment, representation learning
\end{IEEEkeywords}

\section{Introduction}

Multimodal Attributed Graphs (MAGs) provide a natural abstraction for real-world entities whose relational dependencies and semantic attributes are inseparable.
In e-commerce\cite{e_commerce1}, products are connected by co-purchase relations while being described by titles, reviews, and images.
In cellular networks\cite{cellular_1}, cell infrastructures can be modeled as connected nodes, which are described with heterogeneous attributes such as radio signals, and geographic context.
MAG learning therefore aims to jointly exploit graph topology and multimodal semantics encoded by modern vision, language, or vision-language models~\cite{wei2019mmgcn,tao2022mgat}.
This setting supports two complementary task families.
\textit{Graph-centric tasks}, such as node classification, link prediction, and node clustering, evaluate whether representations preserve structural, relational, and class-discriminative information.
\textit{Modality-centric tasks}, such as cross-modal retrieval, modality matching, modality alignment, and graph-conditioned generation, evaluate whether representations capture fine-grained semantic correspondences across modalities.
A unified MAG backbone is therefore expected to support graph reasoning and multimodal semantic matching from the same representation learning pipeline.

Despite notable progress, existing MAG methods remain limited by the assumption that a single graph context or a single fused representation can serve all downstream objectives.
\textbf{(1) Task-agnostic context propagation.}
Graph-centric and modality-centric tasks often require different neighborhoods.
Classification may prefer label-consistent neighbors, link prediction may rely on structural complementarity, clustering may require compact yet separable communities, and retrieval may benefit from semantically aligned nodes that are not directly connected in the original graph.
However, many methods propagate over the original topology or a uniformly learned topology, which can amplify noisy edges, overlook missing semantic relations, and treat task-conflicting edges in the same way.
\textbf{(2) Over-compressed cross-modal fusion.}
Existing fusion or alignment mechanisms often encourage different modalities to collapse into a common representation space.
While such compression can highlight shared semantics, it may also remove modality-specific details that are essential for retrieval, matching, generation, and fine-grained alignment.
Recent studies on neighborhood tokenization, modality-aware propagation, task-specific topology, and shared-private disentanglement provide useful components~\cite{tang2024graphgpto,lai2024ntsformer,zhang2024miggt,shi2024campa,liu2024tmte,yang2022dgf,cheng2023dmgc}, but they do not fully resolve how to learn task-adaptive graph contexts while preserving modality-specific evidence.

Therefore, we center this research around the following question: \textit{How can a MAG backbone learn reliable contexts for different tasks while aligning modalities without erasing their distinctive information?}
Our starting point is that context construction and modality alignment should be treated as two coupled but distinct problems.
Rather than directly aggregating all neighbors from the raw graph, the model should identify which structural edges are reliable, recover semantic neighbor relations that the original graph misses, and adapt the resulting context to the target task.
Rather than forcing all modalities into a single fused embedding, the model should maintain modality-specific propagation trajectories and align them at a finer granularity, so that cross-modal consensus and modality-private evidence can both be retained.
This perspective shifts MAG learning from simple topology-modality fusion toward context-aware modality-topology co-alignment.

To this end, we propose \textbf{CoMAG} (\textbf{Co}ntext-aware \textbf{M}odality-topology co-\textbf{A}lignment for multimodal attributed \textbf{G}raphs), a unified framework designed to serve graph-centric and modality-centric tasks through a shared backbone.
CoMAG is built on two technical pillars.
\textit{Reliable Context Learning} estimates edge reliability from multimodal semantic consistency, supplements the original graph with semantic neighbors, and selects an appropriate context according to the task.
\textit{Modality-preserving Hop-token Alignment} propagates each modality along the learned context as a multi-hop trajectory, treats each modality-hop pair as an alignment token, and decouples the resulting representation into shared consensus and modality-specific residual components.
The shared component supports graph-oriented reasoning, while the modality-specific components preserve the discriminative information required by retrieval, matching, and generation.
Under the OpenMAG protocol, these design choices yield the best reported performance among evaluated methods on graph-level prediction and modality-level matching and generation, suggesting that reliable topology, semantic context recovery, and modality-private evidence are complementary for unified MAG learning.
Our main contributions are summarized below.
\begin{enumerate}
    \item \textbf{Problem Reframing.} We identify unified MAG representation learning as a problem of task-adaptive context construction and modality-preserving alignment, rather than a direct extension of graph aggregation or multimodal fusion.
    \item \textbf{Novel Framework.} We introduce CoMAG, which integrates reliable context learning, modality-specific hop trajectories, hop-token cross-modal matching, and shared-private decoupling into a unified MAG backbone.
    \item \textbf{SOTA Performance.} We validate CoMAG through extensive evaluations to demonstrate its superior performance against competitive baselines and exhibit its robustness against various challenging scenarios.
\end{enumerate}

\section{Related Work}

\subsection{Multimodal Attributed Graph Learning}

Multimodal attributed graph learning considers relational data whose nodes carry heterogeneous semantic evidence.
Recent surveys cast this setting as a general multimodal graph-learning problem rather than a recommendation-only formulation~\cite{ektefaie2023multimodal,peng2024mmglsurvey}.
New benchmarks further broaden MAG evaluation across heterogeneous modalities, graph-centric prediction, and modality-centric reasoning~\cite{mu2024openmag,zhu2025mmgraph,yan2024magb}.
Representative models inject modality-aware convolution, attention, and recommendation objectives into message passing~\cite{wei2019mmgcn,tao2022mgat,jia2023mhgat,LGMRec}.
Other studies target clustering, biomedical analysis, or architecture search over multimodal graph structures~\cite{yang2022dgf,cheng2023dmgc,mgnet,DBLP:conf/aaai/CaiWLZ024}.
Recent graph-transformer and unified-embedding methods further bind multimodal graph signals in shared spaces~\cite{he2025unigraph2,zhang2024miggt,lai2024ntsformer}.
Foundation-model and generative directions extend this line to graph-enhanced multimodal understanding, graph-language assistants, and image generation from MAGs~\cite{tang2024graphgpto,li2024graph4mm,MMGL}.
Together, these works confirm the need to model topology and modality semantics jointly, but most still rely on task-specific architectures or a fixed graph context.
CoMAG retains the broad MAG objective while learning task-adaptive reliable contexts and modality-preserving outputs within one forward pipeline.

\subsection{Reliable Graph Context and Multi-hop Propagation}

Graph neural networks supply the standard machinery for propagating information over relational structures.
Classical message-passing models aggregate local neighborhoods through convolution, attention, sampling, or higher-order aggregation~\cite{kipf2017gcn,defferrard2016chebnet,velickovic2018gat,hamilton2017graphsage,xu2018powerful}.
Later analyses identify over-smoothing and depth limits, while residual and reversible designs improve deep propagation~\cite{li2018deeper,oono2020deep,chen2020gcnii,revgat}.
Propagation-filter methods further control how signals accumulate across multiple hops~\cite{klicpera2019appnp,wu2019sgc,chien2021gprgnn}.
Heterophily studies and graph transformer variants show that useful context can differ sharply from local homophilous neighborhoods~\cite{nguyen2022u2gnn,lim2021linkx,luan2022heterophily,platonov2023heterophily}.
In MAGs, context reliability is even more task-dependent because an edge should be trusted only when its structural relation is compatible with multimodal semantics.
Task-specific topology modification and multimodal graph transformers indicate that context should adapt to the objective~\cite{liu2024tmte,lai2024ntsformer,zhang2024miggt}.
CoMAG follows this principle by constructing a reliability-gated structural graph, a semantic complement graph, and a task-conditioned context mixture before propagation, making context learning part of the backbone itself.

\subsection{Cross-modal Alignment and Shared-private Representation Learning}

Cross-modal alignment seeks comparable representations for heterogeneous modalities while retaining evidence that is unique to each source.
Vision-language pretraining and multimodal binding models provide strong alignment priors~\cite{radford2021clip,girdhar2023imagebind,chen2020simple}, while modern text and vision encoders supply expressive node attributes for MAGs~\cite{dosovitskiy2021vit,oquab2024dinov2,sbert,raffel2020t5,roberta}.
Graph-aware alignment methods inject relational evidence into image-text matching, multimodal path alignment, and graph-enhanced multimodal understanding~\cite{gsmn,shi2024campa,li2024graph4mm,tang2024graphgpto}.
Graph contrastive pretraining similarly learns robust representations by contrasting contextual signals~\cite{GraphCL,GCC,velickovic2018dgi,peng2020gmi}, and masked graph modeling reconstructs hidden evidence from surrounding context~\cite{graphmae2,li2023MaskGAE}.
Yet forcing all modalities into one fused embedding can obscure modality-specific cues, a concern also studied in shared-specific and missing-modality learning~\cite{wang2023shaspec,ma2021smil,ma2022missingtransformer,wu2024missingmodalitysurvey}.
This issue is especially visible in MAGs, where cross-modal agreement may emerge at different propagation depths.
A textual neighborhood can become category-discriminative after several hops, whereas visual evidence may remain local and appearance-driven.
Final-embedding alignment alone cannot tell whether agreement comes from shared semantics or from useful modality-private evidence that should be preserved.
Consequently, a unified MAG backbone needs alignment that is fine-grained enough to compare modality-hop evidence while still separating consensus from modality-specific residual information for $z_i^g$ and $e_i^{(m)}$.

\section{Preliminaries}

\subsection{Multimodal Attributed Graph}

A Multimodal Attributed Graph (MAG) is denoted as $\mathcal{G}=(\mathcal{V},\mathcal{E},\{X^{(m)}\}_{m=1}^{M})$, where $\mathcal{V}$ contains $N=|\mathcal{V}|$ nodes, $\mathcal{E}\subseteq\mathcal{V}\times\mathcal{V}$ is represented by adjacency matrix $A\in\mathbb{R}^{N\times N}$, and $X^{(m)}\in\mathbb{R}^{N\times d_m}$ is the node feature matrix for modality $m$.
The $i$-th node has modality observations $\{x_i^{(m)}\}_{m=1}^{M}$, where $x_i^{(m)}$ is the $i$-th row of $X^{(m)}$.
Different modalities may have different raw dimensions, encoders, and noise patterns, so a MAG learner must combine relational and modality evidence without assuming equal reliability for every task.
Given a task index $\tau$, a unified backbone should produce $z_i^g\in\mathbb{R}^{d}$ for graph-centric objectives and $e_i^{(m)}\in\mathbb{R}^{d}$ for modality-centric objectives.
We write $Z^g=[z_1^g,\ldots,z_N^g]^\top$ and $E^{(m)}=[e_1^{(m)},\ldots,e_N^{(m)}]^\top$.
Our experiments use text and visual features, while the notation allows arbitrary $M$.

\subsection{Learning Objectives}
\label{subsec:learning_objectives}

CoMAG is trained to support graph-centric and modality-centric objectives within one representation-learning problem.
For graph-centric supervision, $z_i^g$ supports node classification, link prediction, and node clustering.
We summarize these terms as $\mathcal{L}_{\mathrm{graph}}=\mathcal{L}_{\mathrm{cls}}+\mathcal{L}_{\mathrm{lp}}+\mathcal{L}_{\mathrm{cluster}}$, where $\mathcal{L}_{\mathrm{cls}}$ predicts node labels, $\mathcal{L}_{\mathrm{lp}}$ scores whether a pair $(i,j)$ should be connected, and $\mathcal{L}_{\mathrm{cluster}}$ encourages structurally and semantically related nodes to form coherent groups.
This objective family requires $Z^g$ to retain topology, neighborhood context, and class-level semantics.

For modality-centric supervision, $\{e_i^{(m)}\}_{m=1}^{M}$ should make different modalities comparable without erasing modality-specific evidence.
For $m\neq n$, the pair $(e_i^{(m)},e_i^{(n)})$ is positive because both representations describe node $i$, while $(e_i^{(m)},e_j^{(n)})$ with $j\neq i$ gives a negative pair.
The retrieval or matching objective therefore increases similarity for aligned modality pairs and decreases it for mismatched pairs, and the full training loss instantiates this behavior through the modality term in Section~\ref{subsec:optimization}.
Generation and alignment tasks reuse the same modality-aware representations as graph-conditioned semantic evidence.

Two auxiliary objectives regularize the shared backbone.
The orthogonality objective $\mathcal{L}_{\perp}$ separates cross-modal consensus from modality-private residuals, preventing alignment from collapsing all modalities into one over-compressed embedding.
The smoothness objective $\mathcal{L}_{\mathrm{smooth}}$ regularizes $Z^g$ over the learned context graph so reliable neighbors have compatible graph representations.
Together, CoMAG optimizes $\mathcal{L}=\lambda_g\mathcal{L}_{\mathrm{graph}}+\lambda_m\mathcal{L}_{\mathrm{ret}}+\lambda_{\perp}\mathcal{L}_{\perp}+\lambda_s\mathcal{L}_{\mathrm{smooth}}$, with detailed definitions in Section~\ref{subsec:optimization}.
Thus, the preliminary objective is to learn graph and modality representations jointly without sacrificing structural discrimination or modality-specific evidence.

\begin{figure*}[t]
    \centering
    \includegraphics[width=\linewidth]{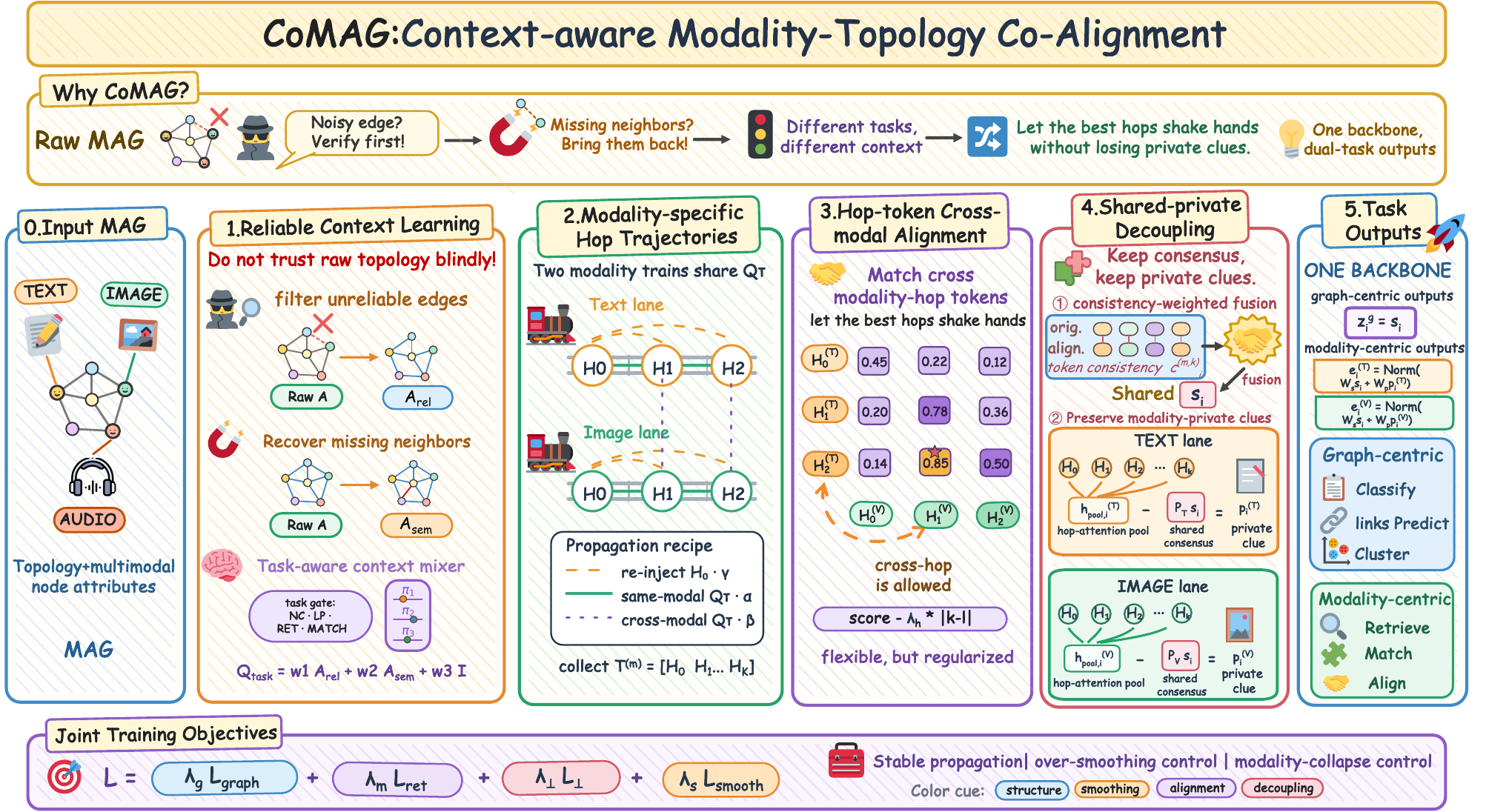}
    \caption{\textbf{CoMAG architecture.}
    CoMAG builds reliable task-adaptive contexts, propagates modality-specific hop trajectories, aligns hop tokens, and decouples shared and private representations for graph- and modality-level outputs.}
    \label{fig:architecture}
\end{figure*}

\section{Methodology}

\subsection{Overview of CoMAG}
\label{subsec:comag_overview}

CoMAG addresses the two limitations identified earlier, namely fixed context propagation and over-compressed modality fusion.
Given a MAG with $M$ modalities, it first learns a task-adaptive context graph, then propagates each modality as a separate multi-hop trajectory, aligns modality-hop evidence across modalities, and finally separates shared graph evidence from modality-private residual evidence.
Graph construction, propagation, alignment, and decoupling remain distinct stages, so each stage addresses one failure mode while the full pipeline is optimized end to end.
Algorithm~\ref{alg:comag_forward} summarizes the resulting inference and training procedure at the module level.

\subsection{Reliable Context Learning}
\label{subsec:reliable_context}

Reliable context learning constructs the graph along which information will propagate.
Instead of treating every observed edge as equally useful, CoMAG asks whether the edge is supported by the modalities attached to its endpoints.
Each modality is first mapped into the same hidden dimension, making textual and visual evidence comparable while preserving modality-specific noise patterns.

\noindent\textbf{Edge reliability estimation.}
For each observed edge $(i,j)\in\mathcal{E}$, CoMAG compares whether same-modality and cross-modality evidence agree across the two endpoints.
Agreement suggests that the edge reflects a meaningful relation, while conflict tells the model to reduce the edge influence before message passing.
This reliability decision is parameterized as
\begin{equation}
\begin{aligned}
    R_{ij}
    =
    \sigma\!\Big(
    g_R\big(
    [A_{ij}, S_{ij}^{\mathrm{intra}},
    S_{ij}^{\mathrm{cross}},
    |S_{ij}^{\mathrm{intra}}-S_{ij}^{\mathrm{cross}}|]
    \big)
    \Big),
\end{aligned}
\label{eq:edge_reliability}
\end{equation}
where $S_{ij}^{\mathrm{intra}}$ and $S_{ij}^{\mathrm{cross}}$ denote within-modality and cross-modality consistency, and $g_R(\cdot)$ is a lightweight scoring network.
The resulting value $R_{ij}\in[0,1]$ determines how strongly the observed edge should participate in propagation.

\noindent\textbf{Semantic context recovery.}
Filtering unreliable edges improves the observed topology but does not restore relations that are missing from the graph.
CoMAG therefore builds a semantic complement graph from cross-modal similarity and keeps only high-confidence neighbors.
The observed topology protects message passing from spurious edges that already exist, while the semantic complement supplies context that the raw topology fails to expose.
The two context components are
\begin{equation}
\begin{aligned}
    A^{\mathrm{rel}}
    &=
    \operatorname{Norm}(A \odot R),
    \\
    A^{\mathrm{sem}}
    &=
    \operatorname{Norm}
    \left(
    \operatorname{TopK}
    \left(\bar{S}^{\mathrm{cross}}\right)
    \right).
\end{aligned}
\label{eq:context_components}
\end{equation}
Here $\bar{S}^{\mathrm{cross}}$ denotes the cross-modal semantic similarity matrix used to recover high-confidence missing neighbors.

\noindent\textbf{Task-adaptive context gate.}
Different downstream objectives do not always need the same structural prior.
CoMAG uses the task embedding and a graph-level summary to choose how much mass to assign to reliable topology, semantic complement, and self information.
The resulting context graph is
\begin{equation}
    Q_\tau
    =
    \pi_{\tau,1} A^{\mathrm{rel}}
    +
    \pi_{\tau,2} A^{\mathrm{sem}}
    +
    \pi_{\tau,3} I .
    \label{eq:Q_tau}
\end{equation}
Here $A^{\mathrm{rel}}$ is the normalized reliability-filtered topology, $A^{\mathrm{sem}}$ is the normalized semantic complement graph, and $\pi_\tau$ is a task-conditioned simplex weight.
The identity channel keeps self evidence available, which prevents the context graph from forcing every node to rely only on neighbors.

\subsection{Modality-specific Multi-hop Context Trajectory}
\label{subsec:stage1}

Once $Q_\tau$ is built, CoMAG propagates each modality through its own trajectory.
Every modality receives the same task-aware context graph while keeping its own hidden state across hops, allowing each modality to react differently to the same graph context.

\noindent\textbf{Task-conditioned propagation coefficients.}
For task $\tau$, CoMAG generates modality-specific coefficients by
\begin{equation}
    (\gamma_{\tau,m}, \alpha_{\tau,m}, \beta_{\tau,m})
    =
    \operatorname{softmax}
    \left(W_c^{(m)} q_\tau\right),
    \label{eq:hop_coeff}
\end{equation}
so each modality can adjust how much it relies on its original signal, its own propagated context, and contextual evidence from other modalities.
The softmax form yields a convex update, which is the condition used later in Theorem~\ref{thm:stability}.

\noindent\textbf{Multi-hop context propagation.}
Starting from $H_{\tau,0}^{(m)}=H_0^{(m)}$, CoMAG updates modality $m$ as
\begin{equation}
\begin{aligned}
    \bar{H}_{\tau,k}^{(-m)}
    &=
    \frac{1}{M-1}
    \sum_{n\neq m}
    H_{\tau,k}^{(n)},
    \\
    H_{\tau,k+1}^{(m)}
    &=
    \gamma_m H_0^{(m)}
    +
    \alpha_m Q_\tau H_{\tau,k}^{(m)}
    +
    \beta_m Q_\tau \bar{H}_{\tau,k}^{(-m)} .
\end{aligned}
\label{eq:hop_propagation}
\end{equation}
The first term re-injects the original modality evidence, the second propagates same-modality context, and the third lets other modalities correct the trajectory through the learned graph.
Each hop also uses normalization and non-linearity in implementation.
CoMAG keeps the whole sequence of hidden states after $K$ steps, so early states preserve local modality evidence while later states encode broader graph context.

\subsection{Hop-token Cross-modal Alignment}
\label{subsec:hop_token_alignment}

Cross-modal correspondence need not be hop-synchronous.
CoMAG treats each modality-hop state as an alignment token and attaches modality and hop-position embeddings, so matching remains aware of both evidence source and receptive-field depth.
The token sequence is
\begin{equation}
\begin{aligned}
    u_i^{(m,k)}
    &=
    H_{\tau,k}^{(m)}[i] + r_m + \ell_k,
    \\
    U_i^{(m)}
    &=
    \left[
    u_i^{(m,0)}, \ldots, u_i^{(m,K)}
    \right],
\end{aligned}
\label{eq:hop_token}
\end{equation}
where $r_m$ and $\ell_k$ are learnable modality and hop-position embeddings.

\noindent\textbf{Distance-penalized cross-modal attention.}
For each ordered modality pair $(m,n)$, CoMAG compares the token sequence of modality $m$ against that of modality $n$.
The attention score may match across hops, but the distance penalty prevents remote hops from dominating without strong semantic support.
The matching matrix is
\begin{equation}
\begin{aligned}
    B_i^{m \leftarrow n}
    =
    \operatorname{softmax}
    \left(
    \frac{Q_i^{(m)}(K_i^{(n)})^\top}{\sqrt{d_h}}
    -
    \lambda_h D_{\mathrm{hop}}
    \right),
\end{aligned}
\label{eq:hop_matching}
\end{equation}
where $D_{\mathrm{hop}}[k,l]=|k-l|$, and $Q_i^{(m)}$ and $K_i^{(n)}$ are projected token sequences.
The coefficient $\lambda_h$ encourages nearby-hop matching when semantic scores are comparable while still allowing cross-hop transfer when the evidence justifies it.

\noindent\textbf{Aligned token construction.}
The matching matrix imports value tokens from modality $n$ into the hop positions of modality $m$.
CoMAG blends the imported evidence with the original tokens, keeping each modality trajectory visible while adding cross-modal context.
\begin{algorithm}[t]
\caption{CoMAG Training Pipeline}
\label{alg:comag_forward}
\begin{algorithmic}[1]
  \REQUIRE MAG $\mathcal{G}$, task embedding $q_\tau$, hop depth $K$, semantic budget TopK, weights $\lambda_g,\lambda_m,\lambda_\perp,\lambda_s$
  \ENSURE Context graph $Q_\tau$, graph outputs $\{z_i^g\}$, modality outputs $\{e_i^{(m)}\}$, and loss $\mathcal{L}$
  \STATE Encode each raw modality into hidden features $\{H_0^{(m)}\}_{m=1}^{M}$.
  \STATE Estimate edge reliability $R_{ij}$ for observed edges with Eq.~\eqref{eq:edge_reliability}.
  \STATE Build context components by Eq.~\eqref{eq:context_components}.
  \STATE Mix the two graphs with self connections to obtain $Q_\tau$ by Eq.~\eqref{eq:Q_tau}.
  \STATE Generate task-conditioned propagation coefficients by Eq.~\eqref{eq:hop_coeff}.
  \FOR{$k=0,\ldots,K-1$}
    \STATE Update each modality trajectory with Eq.~\eqref{eq:hop_propagation}.
  \ENDFOR
  \STATE Build modality-hop tokens by Eq.~\eqref{eq:hop_token} and align them with Eq.~\eqref{eq:hop_matching}.
  \STATE Fuse reliable consensus tokens into $s_i$ by Eq.~\eqref{eq:shared}.
  \STATE Extract modality-private residuals $p_i^{(m)}$ by Eq.~\eqref{eq:private}.
  \STATE Decode $z_i^g$ and $e_i^{(m)}$ by Eq.~\eqref{eq:final_outputs}.
  \IF{training}
    \STATE Apply the orthogonal constraint in Eq.~\eqref{eq:orth_loss}.
    \STATE Optimize the total objective in Eq.~\eqref{eq:total_loss}.
  \ENDIF
  \STATE \textbf{return} $Q_\tau,\{z_i^g\},\{e_i^{(m)}\}$ and $\mathcal{L}$ if training.
\end{algorithmic}
\end{algorithm}
\subsection{Shared-private Representation Decoupling}
\label{subsec:stage2}

Shared-private decoupling prevents cross-modal alignment from erasing useful modality-specific evidence.
The aligned tokens form a shared consensus, while modality-level tasks retain details that may not be useful for graph-centric prediction.
CoMAG therefore constructs one shared representation and one private residual for each modality.

\noindent\textbf{Shared fusion.}
For each token, CoMAG measures how consistent the original modality-hop state is with its aligned counterpart.
Consistent tokens receive larger weights, while unreliable or weakly aligned tokens contribute less to the shared representation.
The shared representation is
\begin{equation}
    s_i
    =
    \sum_{m=1}^{M}
    \sum_{k=0}^{K}
    a_i^{(m,k)}
    f_i^{(m,k)} .
    \label{eq:shared}
\end{equation}
Here $a_i^{(m,k)}$ is a learned consistency-aware token weight, and $f_i^{(m,k)}$ is the fused token feature built from the original and aligned tokens.
This weighted fusion gives graph-centric tasks a stable consensus representation.

\noindent\textbf{Private residual extraction.}
The private branch pools the original trajectory of each modality and removes the part already explained by the shared representation.
\begin{equation}
    p_i^{(m)}
    =
    h_{\mathrm{pool},i}^{(m)}
    -
    P_m s_i ,
    \label{eq:private}
\end{equation}
where $h_{\mathrm{pool},i}^{(m)}$ is the pooled trajectory feature and $P_m$ is a modality-specific projection.
The subtraction makes the private branch carry residual evidence.

\noindent\textbf{Output representations and orthogonality.}
CoMAG uses $s_i$ as the graph-centric representation $z_i^g$ and combines $s_i$ with $p_i^{(m)}$ to obtain the modality-centric representation $e_i^{(m)}$.
\begin{equation}
\begin{aligned}
    z_i^g
    &=
    s_i,
    \\
    e_i^{(m)}
    &=
    \operatorname{Norm}
    \left(
    W_s s_i + W_p p_i^{(m)}
    \right).
\end{aligned}
\label{eq:final_outputs}
\end{equation}
This output design lets graph prediction rely on cross-modal consensus while retrieval, matching, and generation retain modality-specific cues.
CoMAG further limits redundancy between shared and private subspaces with
\begin{equation}
    \mathcal{L}_{\perp}
    =
    \frac{1}{NM}
    \sum_{i=1}^{N}
    \sum_{m=1}^{M}
    \left(
    \frac{
    s_i^\top p_i^{(m)}
    }{
    \|s_i\|_2 \|p_i^{(m)}\|_2
    }
    \right)^2 .
    \label{eq:orth_loss}
\end{equation}
Proposition~\ref{prop:collapse} explains why reducing this alignment helps preserve modality-private information.

\subsection{Optimization}
\label{subsec:optimization}

CoMAG is trained with graph supervision, modality supervision, and two auxiliary regularizers.
The graph term covers available classification, link prediction, and clustering targets, while the modality term pulls paired node modalities together and separates mismatched pairs.
The smoothness term regularizes $Z^g$ over $Q_\tau$, so compatibility follows reliable context edges rather than the raw graph alone.
The complete objective is
\begin{equation}
    \mathcal{L}
    =
    \lambda_g \mathcal{L}_{\mathrm{graph}}
    +
    \lambda_m \mathcal{L}_{\mathrm{ret}}
    +
    \lambda_{\perp} \mathcal{L}_{\perp}
    +
    \lambda_s \mathcal{L}_{\mathrm{smooth}} .
    \label{eq:total_loss}
\end{equation}
The weights $\lambda_g$, $\lambda_m$, $\lambda_{\perp}$, and $\lambda_s$ balance task supervision, modality alignment, shared-private separation, and context smoothness.
The graph and modality terms train the two output families, while the orthogonal and smoothness terms keep consensus and context reliable without forcing both objectives into the same representation.
\begin{table*}[t]
\centering
\caption{OpenMAG dataset statistics.}
\label{tab:datasets}
\fontsize{9pt}{10.5pt}\selectfont
\renewcommand{\arraystretch}{1.1}
\setlength{\tabcolsep}{10pt}
\begin{tabular*}{\textwidth}{@{\extracolsep{\fill}}lrrclcc}
\toprule
\textbf{Dataset} &
\textbf{Nodes} &
\textbf{Edges} &
\textbf{Labels} &
\textbf{Modalities} &
\textbf{Domain} &
\textbf{Train/Val/Test} \\
\midrule
Movies       & 16,672 & 218,390   & 20 & Text,Visual & E-Commerce           & 60\%/20\%/20\% \\
Grocery      & 17,074 & 171,340   & 20 & Text,Visual & E-Commerce           & 60\%/20\%/20\% \\
Toys         & 20,695 & 126,886   & 18 & Text,Visual & E-Commerce           & 60\%/20\%/20\% \\
DY           & 8,299  & 35,627    & -  & Text,Visual & Video Recommendation & 60\%/20\%/20\% \\
KU           & 5,370  & 22,052    & -  & Text,Visual & Video Recommendation & 60\%/20\%/20\% \\
Bili\_dance  & 2,307  & 9,127     & -  & Text,Visual & Video Recommendation & 60\%/20\%/20\% \\
RedditS      & 15,894 & 566,160   & 20 & Text,Visual & Social Media         & 60\%/20\%/20\% \\
Flickr30k    & 31,783 & 181,151   & -  & Text,Visual & Image Networks       & 60\%/20\%/20\% \\
SemArt       & 21,382 & 1,216,432 & -  & Text,Visual & Art Networks         & 60\%/20\%/20\% \\
\bottomrule
\end{tabular*}
\end{table*}
\section{Theoretical Analysis}

We analyze the linear core of Eq.~(5). The following results provide compact theoretical justification for stable propagation, over-smoothing mitigation, hop-distance regularization, and shared-private collapse control.

\begin{theorem}[Stable Context Propagation]
\label{thm:stability}
Assume $Q_\tau$ is non-negative and row-normalized, and the propagation coefficients in Eq.~\eqref{eq:hop_coeff} satisfy $\gamma_m+\alpha_m+\beta_m=1$ with $\gamma_{\min}=\min_m \gamma_m>0$.
Then the stacked propagation core of Eq.~\eqref{eq:hop_propagation} can be written as
\begin{equation}
    \mathbf{H}_{k+1}=P_\tau\mathbf{H}_{k}+\Gamma\mathbf{H}_{0},
    \label{eq:theory_recurrence}
\end{equation}
where $\rho(P_\tau)\leq 1-\gamma_{\min}<1$.
Therefore, $\{\mathbf{H}_{k}\}$ converges to the unique fixed point
\begin{equation}
    \mathbf{H}^{*}
    =
    (I-P_\tau)^{-1}\Gamma\mathbf{H}_{0},
    \label{eq:theory_fixed_point}
\end{equation}
with asymptotic linear rate at most $1-\gamma_{\min}$.
\end{theorem}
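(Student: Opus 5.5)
We work with the linear core of Eq.~\eqref{eq:hop_propagation}, dropping normalization and non-linearity. Stack the modality states as $\mathbf{H}_k=[H_{\tau,k}^{(1)};\ldots;H_{\tau,k}^{(M)}]\in\mathbb{R}^{MN\times d}$. Writing $J=\tfrac{1}{M-1}(\mathbf{1}\mathbf{1}^\top-I_M)$ for the leave-one-out averaging matrix, the recurrence takes the form of Eq.~\eqref{eq:theory_recurrence} with
\begin{equation*}
P_\tau=\big(\operatorname{diag}(\alpha)+\operatorname{diag}(\beta)J\big)\otimes Q_\tau ,
\end{equation*}
and $\Gamma=\operatorname{diag}(\gamma)\otimes I_N$. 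The block $(m,n)$ of $P_\tau$ is $\alpha_m Q_\tau$ when $n=m$ and $\tfrac{\beta_m}{M-1}Q_\tau$ when $n\neq m$. Checking this block by block against Eq.~\eqref{eq:hop_propagation} is routine.

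\textbf{Key bound.} We bound the spectral radius by the induced $\infty$-norm. Every entry of $P_\tau$ is non-negative, because $Q_\tau\ge 0$ and the coefficients are softmax outputs and hence non-negative. Each row of $Q_\tau$ sums to one. Consider a row of $P_\tau$ lying in block-row $m$. Its sum is
\begin{equation*}
\alpha_m\cdot 1+\sum_{n\neq m}\tfrac{\beta_m}{M-1}\cdot 1=\alpha_m+\beta_m=1-\gamma_m\le 1-\gamma_{\min}.
\end{equation*}
Therefore $\rho(P_\tau)\le\|P_\tau\|_\infty\le 1-\gamma_{\min}<1$.

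\textbf{Row-normalization of $Q_\tau$.} This is the one point that needs care. The theorem assumes $Q_\tau$ is row-normalized, but we should confirm the assumption is consistent with Eq.~\eqref{eq:Q_tau}. If $\operatorname{Norm}$ is row normalization, $A^{\mathrm{rel}}$ and $A^{\mathrm{sem}}$ are row-stochastic. Any row of $A^{\mathrm{rel}}$ left empty because $R$ zeroes all its edges can be handled by the convention that such a row receives mass on the self channel. Since $\pi_\tau$ lies on the simplex, $Q_\tau$ is then row-stochastic. If $\operatorname{Norm}$ is instead symmetric normalization, the argument would switch to a weighted norm. I will simply invoke the stated assumption.

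\textbf{Convergence.} With $\rho(P_\tau)<1$, the matrix $I-P_\tau$ is invertible, and $\mathbf{H}^*=(I-P_\tau)^{-1}\Gamma\mathbf{H}_0$ is the unique solution of $\mathbf{H}=P_\tau\mathbf{H}+\Gamma\mathbf{H}_0$; uniqueness follows from invertibility. Subtracting the fixed-point equation from the recurrence gives $\mathbf{H}_{k}-\mathbf{H}^*=P_\tau^{k}(\mathbf{H}_0-\mathbf{H}^*)$. We then have
\begin{equation*}
\|\mathbf{H}_k-\mathbf{H}^*\|_\infty\le(1-\gamma_{\min})^k\|\mathbf{H}_0-\mathbf{H}^*\|_\infty .
\end{equation*}
This is a non-asymptotic bound, so it gives the stated asymptotic linear rate at once. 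Equivalently, the Neumann series $\sum_k P_\tau^k\Gamma\mathbf{H}_0$ converges.
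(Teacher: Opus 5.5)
Your proposal is correct and takes the same route as the paper's sketch. Both stack the modalities into a non-negative operator whose rows have mass $\alpha_m+\beta_m=1-\gamma_m\le 1-\gamma_{\min}$, conclude $\rho(P_\tau)<1$, and obtain the unique fixed point from the invertibility of $I-P_\tau$ (the Neumann series). Your explicit Kronecker form of $P_\tau$ and the non-asymptotic $\infty$-norm contraction bound make that sketch precise.
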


\noindent\textit{Proof sketch.}
Stacking modalities gives a homogeneous operator with row mass at most $1-\gamma_{\min}$, so $\rho(P_\tau)<1$ and the Neumann series yields Eq.~\eqref{eq:theory_fixed_point}.
The result separates adaptivity from instability because the task gate may change the context mixture and the limiting representation, but the residual mass keeps the propagation operator contractive under the stated normalization.

\begin{theorem}[Residual Mitigates Over-smoothing]
\label{thm:oversmoothing}
Let $\Pi$ be the centering projection that removes the constant node-wise component, and let $\tilde{\mathbf{H}}_k=\Pi\mathbf{H}_k$.
For any centered eigencomponent of the stacked propagation matrix $P_\tau$ with eigenvalue $\mu$ satisfying $|\mu|<1$, propagation without residual has $K$-hop response
\begin{equation}
    g_{\mathrm{nores}}(\mu,K)=\mu^{K},
    \label{eq:nores_response}
\end{equation}
which vanishes as $K$ grows.
With residual injection, the corresponding fixed-point response is
\begin{equation}
    g_{\mathrm{res}}(P_\tau)
    =
    (I-P_\tau)^{-1}\Gamma,
    \label{eq:res_response}
\end{equation}
which is finite because $\rho(P_\tau)<1$ and remains non-zero for centered input components retained by $\Gamma$.
\end{theorem}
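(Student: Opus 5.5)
We reuse the stacked form of Theorem~\ref{thm:stability}. Write $\mathbf{H}_{k+1}=P_\tau\mathbf{H}_k+\Gamma\mathbf{H}_0$, where $P_\tau=(\mathrm{diag}(\alpha)\otimes Q_\tau)+(B\otimes Q_\tau)$ collects the same-modality and cross-modality channels, with $B_{mn}=\beta_m/(M-1)$ for $n\neq m$. Set $\Gamma=\mathrm{diag}(\gamma)\otimes I$.

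The first step is to show that the centering projection $\Pi=I_M\otimes(I-\tfrac{1}{N}\mathbf{1}\mathbf{1}^\top)$ interacts well with the recurrence. We have $Q_\tau\mathbf{1}=\mathbf{1}$ by row normalization, and $\Gamma$ commutes with $\Pi$. If $P_\tau$ does not commute with $\Pi$ (it does when $Q_\tau$ is also column-stochastic, e.g.\ after symmetric normalization), we would instead work directly with the eigendecomposition of $P_\tau$. The statement is phrased per eigencomponent, so we restrict to eigenvectors $v$ of $P_\tau$ with eigenvalue $\mu$ that lie in the range of $\Pi$, i.e.\ are orthogonal to the constant modes. We also note the constant mode is precisely the one that over-smoothing collapses onto, which justifies discarding it.

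For the no-residual response, drop $\Gamma\mathbf{H}_0$ and set $\gamma=0$ formally, so the update is $\mathbf{H}_{k+1}=P\mathbf{H}_k$. A centered component $c\,v$ of $\mathbf{H}_0$ then maps to $c\,\mu^K v$ after $K$ hops. This gives $g_{\mathrm{nores}}(\mu,K)=\mu^K$, and $|\mu|<1$ gives $\mu^K\to 0$. So only the constant (smoothed) mode, or modes with $|\mu|=1$, survive, which is the over-smoothing phenomenon.

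For the residual case, unroll the recurrence to $\mathbf{H}_K=P_\tau^K\mathbf{H}_0+\sum_{j=0}^{K-1}P_\tau^j\Gamma\mathbf{H}_0$. By Theorem~\ref{thm:stability}, $\rho(P_\tau)\le 1-\gamma_{\min}<1$, so $P_\tau^K\to0$ and the Neumann series converges to $(I-P_\tau)^{-1}\Gamma\mathbf{H}_0$. This is finite, with norm bounded by $\|\Gamma\|/\gamma_{\min}$ in the induced $\infty$-norm, since $\|P_\tau\|_\infty\le1-\gamma_{\min}$ by the row-mass argument.

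The remaining step, and the main obstacle, is non-vanishing. Here $I-P_\tau$ is invertible, hence injective, and $\Gamma$ is invertible because $\gamma_{\min}>0$. Therefore any centered input component $\Pi\mathbf{H}_0\neq0$ yields $\Gamma\Pi\mathbf{H}_0\neq0$, and so $(I-P_\tau)^{-1}\Gamma\Pi\mathbf{H}_0\neq0$. What remains to check is that this output stays centered, or at least has non-zero centered part. To handle this, I would first treat the case where $P_\tau$ preserves the centered subspace, in which the argument closes immediately. On an eigencomponent the response becomes the scalar $\gamma/(1-\mu)$ when $\Gamma$ acts as a scalar on that component, and this is clearly non-zero and bounded by $1/\gamma_{\min}$. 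Otherwise, I would interpret ``retained by $\Gamma$'' as the hypothesis that $\Pi(I-P_\tau)^{-1}\Gamma\Pi\mathbf{H}_0\neq0$ and state it explicitly. In both cases the contrast is that the response is $\mu^K\to0$ without the residual, while with the residual it converges to a finite, non-zero fixed point.
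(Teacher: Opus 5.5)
Your proposal is correct and follows the paper's route: you unroll the recurrence, read off the $\mu^K$ decay of an eigencomponent without the residual, and use $\rho(P_\tau)<1$ to sum the Neumann series into the finite filter $(I-P_\tau)^{-1}\Gamma$, and your injectivity argument supplies the non-vanishing step that the paper's sketch only asserts. Your fallback hypothesis at the end is unnecessary: since $Q_\tau\mathbf{1}=\mathbf{1}$, both $I-P_\tau$ and $\Gamma^{-1}$ map the blockwise-constant subspace $\ker\Pi$ into itself, so if $(I-P_\tau)^{-1}\Gamma\Pi\mathbf{H}_0$ lay in $\ker\Pi$ then applying $\Gamma^{-1}(I-P_\tau)$ would put $\Pi\mathbf{H}_0$ in $\ker\Pi\cap\operatorname{range}\Pi=\{0\}$; hence $\Pi(I-P_\tau)^{-1}\Gamma\Pi\mathbf{H}_0\neq 0$ whenever $\Pi\mathbf{H}_0\neq 0$, with no commutation or column-stochasticity assumption needed.
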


\noindent\textit{Proof sketch.}
Without residual injection, centered components decay as Eq.~\eqref{eq:nores_response}.
CoMAG re-injects the initial signal, giving the finite geometric filter in Eq.~\eqref{eq:res_response}.
This explains why CoMAG keeps the whole trajectory rather than using only the final propagated state.
Early hops preserve local modality evidence, later hops encode task-conditioned context, and the residual channel prevents the trajectory from degenerating into a purely low-frequency signal.

\begin{proposition}[Hop-distance Penalty Regularizes Cross-hop Alignment]
\label{prop:hop_distance}
For a fixed query hop $k$ in Eq.~\eqref{eq:hop_matching}, let $a_{kl}$ denote the scaled semantic attention score before applying the hop-distance penalty.
For two candidate hops $l_1$ and $l_2$, the attention ratio satisfies
\begin{equation}
    \frac{
    B_i^{m\leftarrow n}[k,l_2]
    }{
    B_i^{m\leftarrow n}[k,l_1]
    }
    =
    \exp
    \left(
    a_{kl_2}-a_{kl_1}
    -
    \lambda_h \Delta d
    \right),
    \label{eq:hop_ratio}
\end{equation}
where $\Delta d=|k-l_2|-|k-l_1|$.
If $l_2$ is farther from $k$ than $l_1$, then it must overcome an extra factor $\exp(-\lambda_h\Delta d)$ to receive larger attention.
\end{proposition}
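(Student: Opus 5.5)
The plan is to compute the ratio directly from the softmax definition in Eq.~\eqref{eq:hop_matching}, since Proposition~\ref{prop:hop_distance} is essentially an algebraic identity about a row of a softmax. Fix node $i$, the ordered modality pair $(m,n)$, and the query hop $k$. Write the pre-softmax logit for key hop $l$ as $c_{kl}=a_{kl}-\lambda_h D_{\mathrm{hop}}[k,l]=a_{kl}-\lambda_h|k-l|$, where $a_{kl}=Q_i^{(m)}[k]\,(K_i^{(n)}[l])^\top/\sqrt{d_h}$ is the scaled semantic score. Because the softmax is applied row-wise over $l\in\{0,\dots,K\}$, we have
\[
B_i^{m\leftarrow n}[k,l]=\frac{\exp(c_{kl})}{Z_k},\qquad Z_k=\sum_{l'=0}^{K}\exp(c_{kl'}).
\]

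Taking the ratio for $l_2$ and $l_1$, the normalizer $Z_k$ cancels because both entries lie in the same row $k$. This cancellation is the only point that needs care: the softmax must normalize over key hops, not over queries, and the paper's convention for attention matrices $Q K^\top$ confirms this. The ratio is therefore $\exp(c_{kl_2}-c_{kl_1})$. Substituting the definition of $c$ gives
\[
\exp\big(a_{kl_2}-a_{kl_1}-\lambda_h(|k-l_2|-|k-l_1|)\big),
\]
which is Eq.~\eqref{eq:hop_ratio} with $\Delta d=|k-l_2|-|k-l_1|$. Both entries are strictly positive, so the ratio is well defined.

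For the interpretive claim, suppose $l_2$ is farther from $k$ than $l_1$, so $\Delta d>0$ and $\lambda_h\ge 0$. Then $B_i^{m\leftarrow n}[k,l_2]>B_i^{m\leftarrow n}[k,l_1]$ holds if and only if the exponent is positive. Equivalently, $a_{kl_2}-a_{kl_1}>\lambda_h\Delta d$, which is the same as
\[
\exp(a_{kl_2})/\exp(a_{kl_1})>\exp(\lambda_h\Delta d).
\]
In words, the semantic advantage of the farther hop must overcome the multiplicative damping factor $\exp(-\lambda_h\Delta d)$. When $\lambda_h>0$ this is a strict handicap. When $\lambda_h=0$ it reduces to plain attention.

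No earlier theorem is needed, and there is no genuine obstacle. The only subtlety is to state the row-wise softmax convention explicitly, and to note that any learnable bias shared across a row would also cancel.
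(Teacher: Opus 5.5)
Your proposal is correct and follows the same route as the paper's proof sketch: the same-row softmax normalizer cancels in the ratio, leaving $\exp(a_{kl_2}-a_{kl_1}-\lambda_h\Delta d)$. The interpretive claim about the extra factor $\exp(-\lambda_h\Delta d)$ then follows directly, and your explicit remarks that the softmax normalizes over key hops and that $\lambda_h\ge 0$ is needed for the penalty reading make precise what the paper leaves implicit.
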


\noindent\textit{Proof sketch.}
The same-row softmax normalizer cancels, leaving the semantic-score difference minus the hop-distance penalty in Eq.~\eqref{eq:hop_ratio}.
The proposition makes $\lambda_h$ interpretable as a distance prior.
Increasing it favors hop-synchronous matching, while smaller values allow stronger cross-hop transfers when the semantic attention score warrants them.

\begin{proposition}[Orthogonal Decoupling Controls Modality Collapse]
\label{prop:collapse}
Assume $\|p_i^{(m)}\|_2\leq R_p$ for all nodes and modalities.
Under the orthogonal loss in Eq.~\eqref{eq:orth_loss}, the mean projection of private residuals onto shared directions is bounded by
\vspace{-0.1em}
\begin{equation}
    \frac{1}{NM}
    \sum_{i,m}
    \left\|
    \operatorname{proj}_{s_i}
    \left(p_i^{(m)}\right)
    \right\|_2
    \leq
    R_p\sqrt{\mathcal{L}_{\perp}}.
    \label{eq:projection_bound}
\end{equation}
\vspace{-0.1em}
An analogous bound holds for projecting $s_i$ onto the direction of $p_i^{(m)}$ when $\|s_i\|_2$ is bounded.
\end{proposition}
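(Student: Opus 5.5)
The plan is to work node by node and then average. For a fixed pair $(i,m)$ with $s_i\neq 0$ and $p_i^{(m)}\neq 0$, write $c_{im}=s_i^\top p_i^{(m)}/(\|s_i\|_2\|p_i^{(m)}\|_2)$ for the cosine appearing in Eq.~\eqref{eq:orth_loss}. By definition of orthogonal projection onto the line spanned by $s_i$,
\begin{equation*}
\left\|\operatorname{proj}_{s_i}\left(p_i^{(m)}\right)\right\|_2=\frac{|s_i^\top p_i^{(m)}|}{\|s_i\|_2}=|c_{im}|\,\|p_i^{(m)}\|_2\leq R_p\,|c_{im}|,
\end{equation*}
using the assumed norm bound. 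Degenerate pairs, where $s_i=0$ or $p_i^{(m)}=0$, have zero projection. For them I adopt the convention that the corresponding term of $\mathcal{L}_\perp$ is read as $0$, or is any value in $[0,1]$, so the inequality below still holds termwise.

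Averaging over $i$ and $m$ then gives
\begin{equation*}
\frac{1}{NM}\sum_{i,m}\left\|\operatorname{proj}_{s_i}\left(p_i^{(m)}\right)\right\|_2\leq R_p\,\frac{1}{NM}\sum_{i,m}|c_{im}|.
\end{equation*}
The key step is to pass from the mean of $|c_{im}|$ to the square root of the mean of $c_{im}^2$. This follows from Cauchy--Schwarz, or equivalently Jensen's inequality for the concave square root:
\begin{equation*}
\frac{1}{NM}\sum_{i,m}|c_{im}|\leq\Big(\frac{1}{NM}\sum_{i,m}c_{im}^2\Big)^{1/2}=\sqrt{\mathcal{L}_\perp}.
\end{equation*}
Combining the two displays yields Eq.~\eqref{eq:projection_bound}.

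The analogous bound for $\operatorname{proj}_{p_i^{(m)}}(s_i)$ uses the same argument with the roles of the two vectors swapped. In that case $\|\operatorname{proj}_{p_i^{(m)}}(s_i)\|_2=|c_{im}|\,\|s_i\|_2\leq R_s|c_{im}|$ when $\|s_i\|_2\leq R_s$, and the same averaging gives $R_s\sqrt{\mathcal{L}_\perp}$.

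I expect no real obstacle. The only delicate points are the zero-norm convention and making sure the averaging runs over exactly the same index set and normalization $1/(NM)$ as in $\mathcal{L}_\perp$, so that Jensen's inequality applies directly.
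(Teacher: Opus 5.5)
Your argument is correct and follows the paper's proof sketch: bound each projection length by $\|p_i^{(m)}\|_2$ times the absolute cosine, then apply Cauchy--Schwarz to pass from the mean absolute cosine to $\sqrt{\mathcal{L}_\perp}$. Your handling of the zero-norm pairs and of the swapped bound for $\operatorname{proj}_{p_i^{(m)}}(s_i)$ adds detail the paper leaves implicit, but it is the same route.
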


\noindent\textit{Proof sketch.}
Eq.~\eqref{eq:orth_loss} averages squared cosines.
Projection length is bounded by the private norm times the absolute cosine, and Cauchy--Schwarz gives Eq.~\eqref{eq:projection_bound}.
The bound clarifies the role of $\mathcal{L}_{\perp}$ in the output layer.
If private residuals align too closely with $s_i$, then $e_i^{(m)}$ can be dominated by duplicated shared information.
Reducing the projection term keeps modality-specific semantics for downstream tasks.

Together, the results justify CoMAG modules through stable context, residual multi-hop trajectories, distance-regularized hop matching, and orthogonal decoupling of $s_i$ and $p_i^{(m)}$.

\section{Experiments}

We conduct experiments to provide a comprehensive evaluation of CoMAG from five perspectives.
\textbf{Q1}. How does CoMAG perform compared with representative baselines across graph-level and modality-level tasks, as reported in Tables~\ref{tab:graph_task_performance} and~\ref{tab:modality_task_performance}?
\textbf{Q2}. Do the key designed modules contribute to the overall effectiveness of CoMAG?
\textbf{Q3}. How sensitive is CoMAG to important hyperparameter settings?
\textbf{Q4}. How robust is CoMAG compared with baselines under text, image, edge, and label noise?
\textbf{Q5}. What is the theoretical computational complexity of CoMAG relative to representative MAG baselines, as summarized in Table~\ref{tab:complexity}?

\subsection{Experimental Setup}

\textbf{Datasets.} We evaluate CoMAG on nine multimodal attributed graph datasets from the OpenMAG benchmark~\cite{mu2024openmag}, including Movies, Grocery, Toys, DY, KU, Bili\_dance, RedditS, Flickr30k, and SemArt.
As summarized in Table~\ref{tab:datasets}, these datasets cover e-commerce, social media, video recommendation, image networks, and art networks.
All datasets contain text and visual node modalities and follow the train/validation/test splits provided by OpenMAG.

\textbf{Baselines.} We compare CoMAG with graph-only and multimodal attributed graph baselines.
For feature-only learning, we include MLP.
For graph-only topology learning, we include GCN~\cite{kipf2017gcn} and GAT~\cite{velickovic2018gat}.
For representative MAG methods, we follow the model set analyzed in Table~\ref{tab:complexity}, using DMGC~\cite{cheng2023dmgc} and DGF~\cite{yang2022dgf} as graph-enhanced methods, and MMGCN~\cite{wei2019mmgcn}, MGAT~\cite{tao2022mgat}, and LGMRec~\cite{LGMRec} as multimodal-enhanced methods.
We also include UniGraph2~\cite{he2025unigraph2} as a recent unified multimodal graph baseline.
CoMAG is evaluated as the proposed sparse-context modality-topology co-alignment model.

\textbf{Evaluation metrics.} We use task-specific metrics throughout the experiments.
For node classification, accuracy and F1-score indicate whether the learned representation supports reliable class prediction.
For link prediction, MRR and Hits@3 measure whether true edges are ranked near the top among candidate relations.
For node clustering, NMI and ARI reflect how well the discovered groups agree with ground-truth categories.
For modality matching, AUC and AP measure the separation between matched and mismatched cross-modal pairs.
For G2Text, BLEU-4 and CIDEr evaluate the lexical and consensus quality of graph-conditioned text generation.
For G2Image, CLIP-S and DINO-S indicate whether generated images preserve semantic alignment and visual consistency with the graph-conditioned target.

\textbf{Implementation details.}
Unless otherwise specified, all trainable models use a hidden dimension of 256, dropout rate of 0.3. We follow the official hyperparameter settings, and if such guidance is missing, we optimize the baseline performance with Adam Optimizer.

\textbf{Experiment Environment.}
The experiments are conducted on a machine with an AMD EPYC 7J13 64-Core Processor, and NVIDIA GeForce RTX 4090 with 24GB memory and CUDA 12.6. The operating system is Ubuntu 22.04.5 LTS with 503GB memory.

\begin{table*}[t]
\centering
\caption{Graph-level downstream task performance comparison. The best result is $\textbf{bold}$ and the second best result is $\underline{underlined}$.}
\label{tab:graph_task_performance}
\scriptsize
\setlength{\tabcolsep}{1.2pt}
\renewcommand{\arraystretch}{1.14}
\resizebox{\textwidth}{!}{
\begin{tabular}{l | c c c c | c c c c | c c c c}
\toprule
\multirow{3}{*}{\textbf{Methods}} &
\multicolumn{4}{c|}{\textbf{Node Classification}} &
\multicolumn{4}{c|}{\textbf{Link Prediction}} &
\multicolumn{4}{c}{\textbf{Node Clustering}} \\
\cmidrule{2-13}
& \multicolumn{2}{c}{Movies}
& \multicolumn{2}{c|}{Grocery}
& \multicolumn{2}{c}{DY}
& \multicolumn{2}{c|}{Bili\_Dance}
& \multicolumn{2}{c}{Toys}
& \multicolumn{2}{c}{RedditS} \\
& Acc & F1-score
& Acc & F1-score
& MRR & Hits@3
& MRR & Hits@3
& NMI & ARI
& NMI & ARI \\
\midrule

MLP
& $51.22_{\scriptsize \pm 0.12}$ & $0.40_{\scriptsize \pm 0.01}$
& $82.60_{\scriptsize \pm 0.26}$ & $0.75_{\scriptsize \pm 0.01}$
& $62.57_{\scriptsize \pm 0.89}$ & $75.65_{\scriptsize \pm 0.92}$
& $28.90_{\scriptsize \pm 0.27}$ & $34.87_{\scriptsize \pm 0.41}$
& $45.47_{\scriptsize \pm 0.68}$ & $29.20_{\scriptsize \pm 0.95}$
& $77.78_{\scriptsize \pm 0.15}$ & $69.90_{\scriptsize \pm 0.97}$ \\

GCN
& $52.51_{\scriptsize \pm 0.62}$ & $0.46_{\scriptsize \pm 0.01}$
& $77.11_{\scriptsize \pm 0.93}$ &
 $0.64_{\scriptsize \pm 0.01}$
& $70.28_{\scriptsize \pm 0.15}$ & $84.21_{\scriptsize \pm 0.15}$
& $37.79_{\scriptsize \pm 0.29}$ & $47.67_{\scriptsize \pm 0.27}$
& $48.03_{\scriptsize \pm 0.71}$ & $32.31_{\scriptsize \pm 0.95}$
& $78.39_{\scriptsize \pm 0.52}$ & $71.21_{\scriptsize \pm 0.96}$ \\

GAT
& $51.38_{\scriptsize \pm 0.64}$ & $0.43_{\scriptsize \pm 0.01}$
& $80.26_{\scriptsize \pm 0.32}$ & $0.73_{\scriptsize \pm 0.01}$
& $70.67_{\scriptsize \pm 0.23}$ & $84.96_{\scriptsize \pm 0.28}$
& $36.85_{\scriptsize \pm 0.36}$ & $47.36_{\scriptsize \pm 0.81}$
& $48.66_{\scriptsize \pm 0.23}$ & $31.96_{\scriptsize \pm 0.94}$
& $78.40_{\scriptsize \pm 0.90}$ & $68.90_{\scriptsize \pm 0.98}$ \\

\midrule

DMGC
& $52.35_{\scriptsize \pm 0.96}$ & $0.45_{\scriptsize \pm 0.03}$
& $80.06_{\scriptsize \pm 5.69}$ &
 $0.69_{\scriptsize \pm 0.08}$
& $74.47_{\scriptsize \pm 0.84}$ & $89.10_{\scriptsize \pm 0.73}$
& $39.95_{\scriptsize \pm 0.85}$ & $53.86_{\scriptsize \pm 0.92}$
& $49.38_{\scriptsize \pm 0.86}$ & $32.51_{\scriptsize \pm 0.88}$
& $79.72_{\scriptsize \pm 0.91}$ & $66.08_{\scriptsize \pm 0.95}$ \\

DGF
& $52.11_{\scriptsize \pm 0.64}$ & $0.37_{\scriptsize \pm 0.02}$
& $\underline{84.78}_{\scriptsize \pm 0.58}$ & $0.76_{\scriptsize \pm 0.02}$
& $\underline{77.28}_{\scriptsize \pm 0.17}$ & $\underline{92.51}_{\scriptsize \pm 0.18}$
& $\underline{42.55}_{\scriptsize \pm 0.12}$ & $\underline{58.25}_{\scriptsize \pm 0.37}$
& $\underline{51.29}_{\scriptsize \pm 0.62}$ & $\underline{36.24}_{\scriptsize \pm 0.91}$
& $\underline{84.89}_{\scriptsize \pm 0.36}$ & $\underline{78.07}_{\scriptsize \pm 0.74}$ \\

\midrule

MMGCN
& $53.04_{\scriptsize \pm 0.73}$ & $0.43_{\scriptsize \pm 0.03}$
& $83.53_{\scriptsize \pm 1.18}$ &
 $0.74_{\scriptsize \pm 0.02}$
& $69.66_{\scriptsize \pm 0.42}$ & $84.17_{\scriptsize \pm 0.55}$
& $37.70_{\scriptsize \pm 0.68}$ & $49.81_{\scriptsize \pm 0.88}$
& $45.39_{\scriptsize \pm 0.93}$ & $27.56_{\scriptsize \pm 0.48}$
& $67.90_{\scriptsize \pm 0.96}$ & $50.42_{\scriptsize \pm 0.99}$ \\

MGAT
& $52.39_{\scriptsize \pm 1.11}$ &
 $0.38_{\scriptsize \pm 0.06}$
& $82.61_{\scriptsize \pm 0.24}$ & $0.76_{\scriptsize \pm 0.01}$
& $70.37_{\scriptsize \pm 0.17}$ & $85.05_{\scriptsize \pm 0.30}$
& $36.74_{\scriptsize \pm 0.16}$ & $49.37_{\scriptsize \pm 0.69}$
& $46.72_{\scriptsize \pm 0.90}$ & $30.02_{\scriptsize \pm 0.93}$
& $73.36_{\scriptsize \pm 0.95}$ & $60.55_{\scriptsize \pm 0.98}$ \\

LGMRec
& $\underline{53.74}_{\scriptsize \pm 0.65}$ & $\underline{0.47}_{\scriptsize \pm 0.01}$
& $\underline{83.83}_{\scriptsize \pm 0.81}$ &
 $\underline{0.77}_{\scriptsize \pm 0.01}$
& $69.57_{\scriptsize \pm 0.04}$ & $85.08_{\scriptsize \pm 0.17}$
& $39.92_{\scriptsize \pm 0.40}$ & $51.85_{\scriptsize \pm 0.88}$
& $\underline{49.94}_{\scriptsize \pm 0.94}$ & $\underline{35.31}_{\scriptsize \pm 0.96}$
& $\underline{80.73}_{\scriptsize \pm 0.96}$ & $\underline{73.10}_{\scriptsize \pm 0.98}$ \\

\midrule

UniGraph2
& $46.78_{\scriptsize \pm 0.36}$ & $0.31_{\scriptsize \pm 0.02}$
& $75.86_{\scriptsize \pm 0.90}$ & $0.64_{\scriptsize \pm 0.03}$
& $65.78_{\scriptsize \pm 0.58}$ & $78.43_{\scriptsize \pm 0.64}$
& $31.58_{\scriptsize \pm 0.90}$ & $41.32_{\scriptsize \pm 0.95}$
& $10.34_{\scriptsize \pm 0.98}$ & $3.09_{\scriptsize \pm 0.94}$
& $31.85_{\scriptsize \pm 0.97}$ & $12.85_{\scriptsize \pm 0.92}$ \\

\midrule

\rowcolor[gray]{0.9} \textbf{CoMAG (Ours)}
& $\textbf{55.62}_{\scriptsize \pm 0.72}$ & $\textbf{0.49}_{\scriptsize \pm 0.03}$
& $\textbf{87.75}_{\scriptsize \pm 0.68}$ &
 $\textbf{0.80}_{\scriptsize \pm 0.03}$
& $\textbf{79.98}_{\scriptsize \pm 0.32}$ & $\textbf{95.75}_{\scriptsize \pm 0.28}$
& $\textbf{44.04}_{\scriptsize \pm 0.41}$ & $\textbf{60.29}_{\scriptsize \pm 0.63}$
& $\textbf{53.09}_{\scriptsize \pm 0.57}$ & $\textbf{37.51}_{\scriptsize \pm 0.72}$
& $\textbf{87.86}_{\scriptsize \pm 0.64}$ & $\textbf{80.80}_{\scriptsize \pm 0.79}$ \\

\bottomrule
\end{tabular}
}
\end{table*}
\begin{table}[H]
\centering
\caption{Performance Comparison on Modality-level tasks.}
\label{tab:modality_task_performance}
\scriptsize
\setlength{\tabcolsep}{1.2pt}
\renewcommand{\arraystretch}{1.14}
\resizebox{\columnwidth}{!}{
\begin{tabular}{l | c c | c c | c c}
\toprule
\multirow{3}{*}{\textbf{Methods}} &
\multicolumn{2}{c|}{\textbf{Modality Match}} &
\multicolumn{2}{c|}{\textbf{G2Text}} &
\multicolumn{2}{c}{\textbf{G2Image}} \\
\cmidrule{2-7}
& \multicolumn{2}{c|}{KU}
& \multicolumn{2}{c|}{Flickr30k}
& \multicolumn{2}{c}{SemArt} \\
& AUC & AP
& BLEU-4 & CIDEr
& CLIP-S & DINO-S \\
\midrule

MLP
& $86.40_{\scriptsize \pm 0.42}$ & $84.72_{\scriptsize \pm 0.48}$
& $5.87_{\scriptsize \pm 0.18}$ & $39.37_{\scriptsize \pm 0.64}$
& $67.54_{\scriptsize \pm 0.32}$ & $49.94_{\scriptsize \pm 0.45}$ \\

GCN
& $87.35_{\scriptsize \pm 0.37}$ & $85.90_{\scriptsize \pm 0.41}$
& $5.69_{\scriptsize \pm 0.21}$ & $38.44_{\scriptsize \pm 0.58}$
& $67.15_{\scriptsize \pm 0.29}$ & $49.65_{\scriptsize \pm 0.38}$ \\

GAT
& $87.80_{\scriptsize \pm 0.39}$ & $86.14_{\scriptsize \pm 0.44}$
& $5.74_{\scriptsize \pm 0.22}$ & $38.62_{\scriptsize \pm 0.61}$
& $67.28_{\scriptsize \pm 0.31}$ & $49.78_{\scriptsize \pm 0.40}$ \\

\midrule

DMGC
& $89.46_{\scriptsize \pm 0.33}$ & $88.10_{\scriptsize \pm 0.36}$
& $6.12_{\scriptsize \pm 0.19}$ & $41.68_{\scriptsize \pm 0.57}$
& $68.02_{\scriptsize \pm 0.28}$ & $51.44_{\scriptsize \pm 0.39}$ \\

DGF
& $90.18_{\scriptsize \pm 0.28}$ & $88.92_{\scriptsize \pm 0.33}$
& $\underline{6.83}_{\scriptsize \pm 0.17}$ & $\underline{44.28}_{\scriptsize \pm 0.52}$
& $68.43_{\scriptsize \pm 0.25}$ & $52.30_{\scriptsize \pm 0.34}$ \\

\midrule

MMGCN
& $89.02_{\scriptsize \pm 0.36}$ & $87.64_{\scriptsize \pm 0.42}$
& $5.82_{\scriptsize \pm 0.25}$ & $38.95_{\scriptsize \pm 0.62}$
& $67.62_{\scriptsize \pm 0.32}$ & $50.20_{\scriptsize \pm 0.46}$ \\

MGAT
& $88.76_{\scriptsize \pm 0.34}$ & $87.20_{\scriptsize \pm 0.39}$
& $5.79_{\scriptsize \pm 0.24}$ & $38.88_{\scriptsize \pm 0.60}$
& $67.78_{\scriptsize \pm 0.31}$ & $50.72_{\scriptsize \pm 0.44}$ \\

LGMRec
& $\underline{90.64}_{\scriptsize \pm 0.27}$ & $\underline{89.18}_{\scriptsize \pm 0.31}$
& $5.95_{\scriptsize \pm 0.20}$ & $39.00_{\scriptsize \pm 0.54}$
& $\underline{68.47}_{\scriptsize \pm 0.24}$ & $\underline{52.73}_{\scriptsize \pm 0.33}$ \\

\midrule

UniGraph2
& $84.05_{\scriptsize \pm 0.58}$ & $81.72_{\scriptsize \pm 0.65}$
& $5.28_{\scriptsize \pm 0.31}$ & $35.74_{\scriptsize \pm 0.72}$
& $65.86_{\scriptsize \pm 0.44}$ & $48.62_{\scriptsize \pm 0.57}$ \\

\midrule

\rowcolor[gray]{0.9} \textbf{CoMAG (Ours)}
& $\textbf{93.81}_{\scriptsize \pm 0.25}$ & $\textbf{92.30}_{\scriptsize \pm 0.29}$
& $\textbf{7.07}_{\scriptsize \pm 0.18}$ & $\textbf{45.83}_{\scriptsize \pm 0.49}$
& $\textbf{70.87}_{\scriptsize \pm 0.27}$ & $\textbf{54.58}_{\scriptsize \pm 0.36}$ \\

\bottomrule
\end{tabular}
}
\end{table}
\subsection{Performance Comparison}

To answer \textbf{Q1}, Tables~\ref{tab:graph_task_performance} and~\ref{tab:modality_task_performance} compare CoMAG with representative baselines on graph-level and modality-level tasks.

\noindent\textbf{Graph-level performance.}
Table~\ref{tab:graph_task_performance} shows that CoMAG achieves the strongest graph-level performance among the compared baselines across classification, link prediction, and clustering settings.
On Grocery, CoMAG reaches $87.75$, outperforming the strongest non-CoMAG result $84.78$.
On DY, it reaches $95.75$ compared with $92.51$ from DGF.
This advantage is not shared by all baselines.
Feature-only and graph-only models perform poorly when either semantic evidence or reliable neighborhood selection is required, while several multimodal baselines remain competitive on individual datasets but fall behind when structural prediction and clustering are considered together.
The observed behavior is consistent with CoMAG's reliable context design, where edge filtering suppresses noisy topology, semantic neighbors supplement missing relations, and the task-adaptive context keeps propagation aligned with the downstream objective.

\noindent\textbf{Modality-level performance.}
Table~\ref{tab:modality_task_performance} further shows that CoMAG remains the best among compared baselines on modality-level tasks.
The reported tasks emphasize cross-modal matching and generation quality, where fine-grained multimodal evidence creates clearer separation among methods.
On KU, CoMAG reports $93.81$ while LGMRec gives $90.64$.
On Flickr30k, CoMAG improves the graph-conditioned text generation score from DGF's $44.28$ to $45.83$.
Methods that rely mainly on graph aggregation or direct multimodal fusion can still perform reasonably, but they show weaker modality preservation when fine-grained cross-modal evidence is needed.
CoMAG's hop-token alignment and shared-private decoupling explain this pattern by aligning contextual evidence across modalities while keeping modality-specific cues available for matching and generation.

\subsection{Ablation Study}
\label{sec:ablation}

To answer \textbf{Q2}, Table~\ref{tab:ablation} compares the complete CoMAG with three variants that remove one key module at a time. The full model remains strongest across both the graph-level and generation settings, showing that the gains do not come from a single isolated design choice. Removing edge reliability weakens graph-level prediction, with the Grocery result falling from $87.75$ to $85.82$, because noisy observed edges are no longer filtered before propagation. Removing the semantic graph hurts both tasks by discarding missing semantic neighbors, which limits the context available beyond the raw topology. Removing the private residual causes the clearest degradation in G2Text quality, where the result falls to $40.86$, indicating that shared consensus alone is insufficient for preserving modality-specific generation evidence. Overall, the ablation study confirms that reliable topology, semantic context recovery, and modality-private evidence are complementary components of CoMAG.

\subsection{Hyperparameter Analysis}

The auxiliary losses $\lambda_\perp$ (orthogonal constraint) and $\lambda_s$ (graph smoothing) control the balance between graph-centric and modality-centric quality.
A higher $\lambda_\perp$ pushes the shared and private subspaces toward orthogonality, while $\lambda_s$ controls how strongly shared representations follow the learned context graph.
We explore $\lambda_\perp \in \{0.000, 0.005, 0.010, 0.015, 0.020\}$ and $\lambda_s \in \{0.00, 0.05, 0.10, 0.15, 0.20\}$.
\begin{table}[t]
\centering
\caption{Ablation studies on key modules of CoMAG.}
\label{tab:ablation}
\scriptsize
\setlength{\tabcolsep}{2.2pt}
\renewcommand{\arraystretch}{1.1}
\begin{tabular*}{\columnwidth}{@{\extracolsep{\fill}}lcccc@{}}
\toprule
\multirow{2}{*}{\textbf{Methods}} &
\multicolumn{2}{c}{\textbf{Grocery (Node Cls.)}} &
\multicolumn{2}{c}{\textbf{Flickr30k(G2Text)}} \\
\cmidrule(lr){2-3}\cmidrule(lr){4-5}
& Acc & F1 & BLEU4 & CIDEr \\
\midrule
Full CoMAG
& $87.75_{\scriptsize \pm 0.68}$
& $0.800_{\scriptsize \pm 0.03}$
& $7.07_{\scriptsize \pm 0.18}$
& $45.83_{\scriptsize \pm 0.49}$ \\
w/o Edge Reliability
& $85.82_{\scriptsize \pm 0.91}$
& $0.776_{\scriptsize \pm 0.04}$
& $6.72_{\scriptsize \pm 0.24}$
& $43.96_{\scriptsize \pm 0.62}$ \\
w/o Semantic Graph
& $86.21_{\scriptsize \pm 0.83}$
& $0.783_{\scriptsize \pm 0.03}$
& $6.48_{\scriptsize \pm 0.27}$
& $42.71_{\scriptsize \pm 0.71}$ \\
w/o Private Residual
& $86.90_{\scriptsize \pm 0.76}$
& $0.791_{\scriptsize \pm 0.03}$
& $6.11_{\scriptsize \pm 0.31}$
& $40.86_{\scriptsize \pm 0.84}$ \\
\bottomrule
\end{tabular*}
\end{table}
\begin{figure}[t]
\centering
\includegraphics[width=\columnwidth]{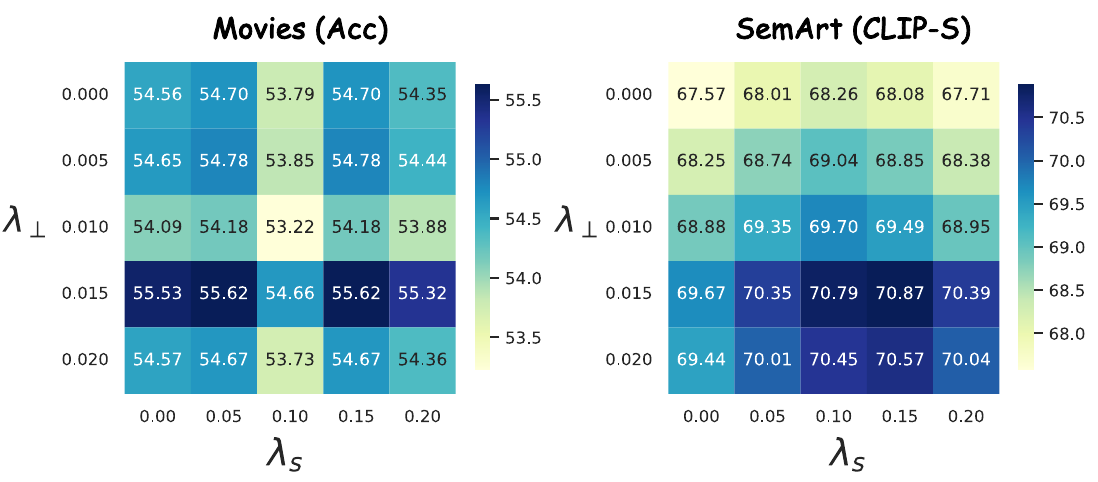}
\vspace{-0.5cm}
\caption{Hyperparameter sensitivity of two key parameters.}
\label{fig:hyperparameter}
\end{figure}

\begin{figure*}[t]
\centering
\includegraphics[width=\textwidth]{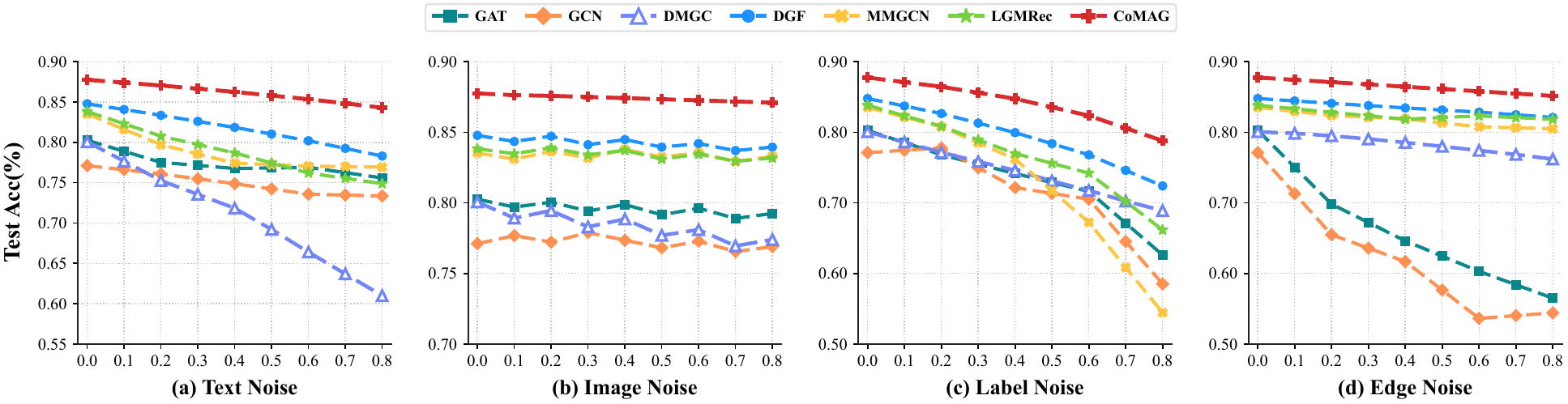}
\vspace{-0.5cm}
\caption{Robustness comparison under text, image, label, and edge noise.}
\label{fig:robustness}
\end{figure*}

To answer \textbf{Q3}, Fig.~\ref{fig:hyperparameter} shows that CoMAG is stable over a broad region rather than depending on one fragile setting.
Movies classification is strongest around $\lambda_\perp=0.015$, where moderate shared-private separation improves discriminative structure without suppressing shared graph evidence.
SemArt favors moderate-to-strong separation with $\lambda_s$ near $0.10$--$0.15$, indicating that smoothing helps image generation when it supports the learned context without over-constraining the shared representation.
This pattern is consistent with CoMAG's design, where orthogonal separation preserves private modality evidence and context smoothness is useful only when it remains aligned with reliable semantic neighborhoods.

\subsection{Robustness Analysis}

To answer \textbf{Q4}, Fig.~\ref{fig:robustness} compares CoMAG with representative baselines under text, image, label, and edge noise.
CoMAG keeps the strongest curve across all four settings, showing that its performance does not rely on a single clean source of evidence.
Label noise is the hardest setting because corrupted supervision directly changes the target signal, while image noise is comparatively mild because text and graph context can still stabilize the representation.
Edge noise especially separates CoMAG from topology-dependent baselines, as reliable edge filtering and semantic context recovery reduce the impact of corrupted observed topology.
The robustness trends also support the shared-private design, since modality-private residuals help preserve useful evidence when one modality or structural channel becomes unreliable.

\subsection{Theoretical Complexity Analysis}

We analyze the backbone-level complexity of representative MAG methods in Table~\ref{tab:complexity}.
Let $|\mathcal{V}|$ and $|\mathcal{E}|$ denote the number of nodes and edges, $M$ the number of modalities, $L$ the number of baseline GNN layers, $d$ the hidden dimension, $K$ a baseline-specific cluster or iteration count, and $|Q_\tau|$ the number of nonzero edges in CoMAG's learned sparse context graph.
For CoMAG, the hop depth $K_h$ is small and fixed, so its hop-token alignment overhead is absorbed into constant factors.
\begin{table}[t]
\centering
\caption{Theoretical complexity analysis between baselines.}
\label{tab:complexity}
\scriptsize
\setlength{\tabcolsep}{1.6pt}
\renewcommand{\arraystretch}{1.05}
\resizebox{\columnwidth}{!}{%
\begin{tabular}{l c c c}
\toprule
\textbf{Models} & \textbf{Training} & \textbf{Inference} & \textbf{Memory} \\
\midrule
DMGC & $O(|\mathcal{V}|^2 d + K|\mathcal{V}|d)$ & $O(|\mathcal{V}|^2 d)$ & $O(|\mathcal{V}|^2 + Kd)$ \\
DGF & $O(|\mathcal{E}|d + |\mathcal{V}|d^2)$ & $O(|\mathcal{E}|d)$ & $O(|\mathcal{V}|d + |\mathcal{E}|)$ \\
\midrule
MMGCN & $O(ML|\mathcal{E}|d)$ & $O(ML|\mathcal{E}|d)$ & $O(ML|\mathcal{V}|d)$ \\
MGAT & $O(ML(|\mathcal{V}|d^2+|\mathcal{E}|d))$ & $O(ML(|\mathcal{V}|d^2+|\mathcal{E}|d))$ & $O(ML|\mathcal{V}|d + |\mathcal{E}|)$ \\
LGMRec & $O(L|\mathcal{E}|d + M|\mathcal{V}|d)$ & $O(L|\mathcal{E}|d + M|\mathcal{V}|d)$ & $O((M+L)|\mathcal{V}|d + |\mathcal{E}|)$ \\
\midrule
\textbf{CoMAG} & $O(M(|\mathcal{E}|+|Q_\tau|)d)$ & $O(M(|\mathcal{E}|+|Q_\tau|)d)$ & $O(M|\mathcal{V}|d + |Q_\tau|)$ \\
\bottomrule
\end{tabular}
}
\end{table}
$Q_\tau$ denotes the learned sparse context graph.
Since $M$ and $K_h$ are small in practice and $|Q_\tau|$ is controlled by sparse semantic neighbor selection, CoMAG has edge-linear complexity comparable to sparse multimodal GNNs while avoiding the dense $O(|\mathcal{V}|^2)$ memory cost of structure-learning methods.

\section{Conclusion}

We introduced CoMAG, a unified framework for multimodal attributed graph learning that simultaneously serves graph-centric and modality-centric tasks from a single forward pass.
Our experiments compare CoMAG with representative feature-only, graph-only, multimodal, and unified MAG baselines across graph-level and modality-level OpenMAG tasks.
Through four integrated modules (edge reliability context graph, modality-specific hop trajectories, hop-token cross-modal matching, and shared-private decoupling), CoMAG provides task-adaptive context construction and modality-preserving alignment with formal theoretical guarantees covering linear convergence to a unique fixed point, provable over-smoothing mitigation, and bounded modality collapse.
Experiments on both modality and graph-level tasks confirm CoMAG's consistent advantages over baselines.
Future work will pursue broader domain evaluations, and joint multi-task training with a calibrated loss schedule to further strengthen cross-task generalization.

\bibliographystyle{IEEEtran}
\bibliography{references}

\end{document}